\acrodef{SVM}{Support Vector Machine}
\acrodef{GP}{Gaussian Process}
\acrodef{GPR}{\ac{GP} Regression}
\acrodef{GPC}{\ac{GP} Classification}
\acrodef{MAP}{Maximum a-Posteriori}
\acrodef{SVD}{Singular Value Decomposition}
\acrodef{OLS}{Ordinary Least Squares}
\acrodef{BRR}{Bayesian Ridge Regression}
\acrodef{RBF}{Radial Basis Function}
\acrodef{LR}{Logistic Regression}
\acrodef{BLR}{Bayesian \ac{LR}}
\acrodef{CDF}{cumulative distribution function}
\acrodef{PDF}{probability distribution function}
\acrodef{NN}{Neural Network}
\acrodef{BNN}{Bayesian \ac{NN}}
\acrodef{RBF}{Radial Basis Function}
\acrodef{RKHS}{reproducing kernel Hilbert space}
\acrodef{VI}{Variational Inference}
\acrodef{SVI}{Stochastic \ac{VI}}
\acrodef{EP}{Expectation Propagation}
\acrodef{MCMC}{Markov Chain Monte-Carlo}
\acrodef{PAV}{Pool Adjacent Violators}
\acrodef{NLL}{negative log likelihood}
\acrodef{MSE}{mean squared error}
\acrodef{PBL}{pinball loss}
\acrodef{ELBO}{evidence lower bound}
\newtheorem{definition}{Definition}
\newtheorem{theorem}{Theorem}
\theoremstyle{definition}
\newtheorem{remark}{Remark}
\renewcommand{\Pr}{{\mathsf{P}}}
\newcommand{\pr}{{\mathsf{p}}}
\newcommand{\gp}{{\mathsf{gp}}}
\newcommand{\f}{{\mathsf{f}}}
\newcommand{\B}{{\mathbf{B}}}
\newcommand{\C}{{\mathbf{C}}}
\newcommand{\K}{{\mathbf{K}}}
\renewcommand{\sb}{{\mathbf{s}}}
\newcommand{\wb}{{\mathbf{w}}}
\newcommand{\yb}{{\mathbf{y}}}
\newcommand{\zeros}{{\mathbf{0}}}
\newcommand{\posthoc}{\emph{post-hoc}}
\newcommand{\Posthoc}{\emph{Post-hoc}}
\newcommand{\ie}{\emph{i.e.}}
\newcommand{\eg}{\emph{e.g.}}
\newcommand{\GPB}{\textsc{GP-Beta}}
\Crefname{equation}{Eq.}{Eqs.}% {<type>}{<singular>}{<plural>}
\Crefname{figure}{Fig.}{Figs.}% {<type>}{<singular>}{<plural>}
\Crefname{section}{Sec.}{Secs.}% {<type>}{<singular>}{<plural>}
\Crefname{algorithm}{Alg.}{Algs.}% {<type>}{<singular>}{<plural>}
\Crefname{theorem}{Thm.}{Thms.}% {<type>}{<singular>}{<plural>}
\Crefname{definition}{Def.}{Defs.}% {<type>}{<singular>}{<plural>}
\Crefname{appendix}{Appx.}{Appxs.}
\Crefname{table}{Tab.}{Tabs.}
\icmltitlerunning{Distribution Calibration for Regression}
\begin{document}

\twocolumn[
\icmltitle{Distribution Calibration for Regression}

\icmlsetsymbol{equal}{*}

\begin{icmlauthorlist}
\icmlauthor{Hao Song}{uob}
\icmlauthor{Tom Diethe}{amazon}
\icmlauthor{Meelis Kull}{uot}
\icmlauthor{Peter Flach}{uob,ati}
\end{icmlauthorlist}

\icmlaffiliation{uob}{University of Bristol, Bristol, United Kingdom}
\icmlaffiliation{uot}{University of Tartu, Tartu, Estonia}
\icmlaffiliation{amazon}{Amazon Research, Cambridge, United Kingdom}
\icmlaffiliation{ati}{The Alan Turing Institute, London, United Kingdom}

\icmlcorrespondingauthor{Hao Song}{hao.song@bristol.ac.uk}

\icmlkeywords{Calibration, Regression, Gaussian Process}

\vskip 0.3in
]

\printAffiliationsAndNotice

\begin{abstract}
We are concerned with obtaining well-calibrated output distributions from regression models. Such distributions allow us to quantify the uncertainty that the model has regarding the predicted target value. 
We introduce the novel concept of distribution calibration, and demonstrate its advantages over the existing definition of quantile calibration.
We further propose a \posthoc{} approach to improving the predictions from previously trained regression models, using multi-output Gaussian Processes with a novel Beta link function.
The proposed method is experimentally verified on a set of common regression models and shows improvements for both distribution-level and quantile-level calibration.
\end{abstract}

\section{Introduction}
\label{sec:introduction}

% \HS{Reduce text}

% We are doing calibration regression.
With recent progress in predictive machine learning, many models are now capable of providing outstanding performance with respect to certain metrics, such as accuracy in classification or mean squared error in regression.
While such models are suitable for some tasks, they often cannot provide well-quantified uncertainties on the target variables. % given each model output.
In this paper we focus on this problem in the regression setting, extending concepts from the well-established framework of probability calibration for classification.

% What is calibration, What is calibrated probabilities.
In a classification task, a probabilistic prediction $s \in [0,1]$ for the positive class is calibrated if the following condition holds: among all the instances receiving this same prediction value $s$, the probability of observing a positive label is $s$.
%From a frequentist point of view, a $0.5$ estimated probability of rain is calibrated if, among all the days receiving this probability estimate, half of those days it indeed rained.
% A Bayesian might say that this $0.5$ is calibrated if it yields a conditional distribution over days about which we should be maximally uncertain whether it will rain or not.
%
% Calibration is good.
Having such calibrated outputs is important as they can be interpreted as degrees of uncertainty on the class, hence enabling quantitative approaches towards decision making, such as cost-sensitive classification \cite{Zadrozny2002}.
% But lots of models are not calibrated.
However, from simple models (\eg{} Na{\"i}ve Bayes) to complex ones (\eg{} (deep) \acp{NN}), poor calibration is often observed, irrespective of the model's complexity or its probabilistic nature \cite{Guo2017, Kull2017}. 
To mitigate this, several techniques have been proposed to apply \posthoc{} corrections to the outputs from trained classifiers such as Platt scaling \cite{Platt1999}, Isotonic regression \cite{Zadrozny2002}, and Beta calibration \cite{Kull2017}.
% reference to existing approach
%One of the well-known parametric approaches is Logistic calibration \cite{Platt1999}, which uses the logistic function to map unbounded classifier scores, such as those produced by the \ac{SVM}, into the interval $[0, 1]$ so as to represent better calibrated probabilities.
%Recently \cite{Kull2017} proposed the Beta calibration approach that allows more flexible adjustment beyond a simple sigmoid shape. 

% What is calibration in regression.
In the setting of regression, calibration has been traditionally defined through predicted credible intervals \cite{Gneiting2007probabilistic,Fasiolo2017,Kuleshov18}, where a $0.95$ predicted credible level (e.g. conditional quantile) is calibrated if marginally $95\%$ of the true target values are below or equal to it. 
Having a quantile-calibrated regressor is particularly useful for certain forecasting tasks such as energy usage \cite{Gneiting2007probabilistic} and supply chain optimisation \cite{jatta2016empirical}. 
% \MK{here the term 'quantile' jumped in without relating to previous sentences}
% Quantile regression and condition densities estimation.
%In terms of predictive models that are able to achieve out-of-the-box calibrations, existing approaches can be loosely divided into two categories.
Existing approaches which aim to obtain calibrated predictions as part of the training, can be loosely divided into two categories: 
(i) quantile regression \cite{Koenker2001}, where it has shown that generalised additive models can be applied to yield better-calibrated quantiles \cite{Fasiolo2017}; (ii) direct application of conditional density estimators \cite{Bishop2006,Sugiyama2010,orallo2014probabilistic} to obtain an estimated \ac{CDF}, which can then be used to generate corresponding credible intervals.

% Post-hoc is also worth considering
While such approaches can be good options when simple models will suffice, they are less suitable when employing (possibly extant) specialised models, such as (pre-trained) deep \ac{NN} models.
% Recent reports with the isotonic approach
Unlike the case of classification, the \posthoc{} calibration approach in regression has been left largely unexplored. %, since practitioners have been primarily interested in the estimated mean value and not many models are able to provide probabilistic outputs other than a (potentially biased) standard deviation.
Recently, \cite{Kuleshov18} proposed a \posthoc{} approach that applies isotonic regression to match the predicted \ac{CDF} and empirical frequency, so that the final results are better calibrated in the quantile sense.
% While the proposed approach shows improvements on prediction error and quantile error, the estimated \ac{PDF} can be highly fluctuating due to the step-wise nature of isotonic regression. 

% Quantile-level calibration is not enough
While quantile-level calibration is useful in certain scenarios, it is defined uniformly over the entire input space and does not ensure calibration for a particular prediction, unlike the classification setting.
For instance, a quantile-calibrated regressor cannot always guarantee that all instances receiving an estimated mean of $\mu$ and standard deviation $\sigma$ are indeed distributed as a Gaussian distribution with the same moments.
% Our work and contribution
In this paper, we focus on a \posthoc{} method that aims to achieve distribution-level calibration and hence gives more accurate uncertainty information for a continuous target variable.
%We introduce the concept of distribution calibration, and demonstrate that being calibrated on a distribution-level will naturally lead to calibrated quantiles.
%Measures to examine and verify such distribution calibrations are also given and discussed.
%Finally, we propose a \ac{GP}-based approach to solve the task of \posthoc{} density calibration.
%The proposed approach models the distribution over calibration parameters, and uses a novel differential link function to calibrate any given regression outputs.
% To ensure the scalability of the model, we further provide a solution based on stochastic variational inference together with induced pseudo-points.
% Finally, the proposed approach is experimentally analysed on different regression models such as \ac{GP} regression \cite{Rasmussen2006} and \ac{NN} regression with uncertainty approximated by dropouts \cite{Gal2016}.

\begin{figure*}[t]
    \centering
    \includegraphics[width=0.63\textwidth]{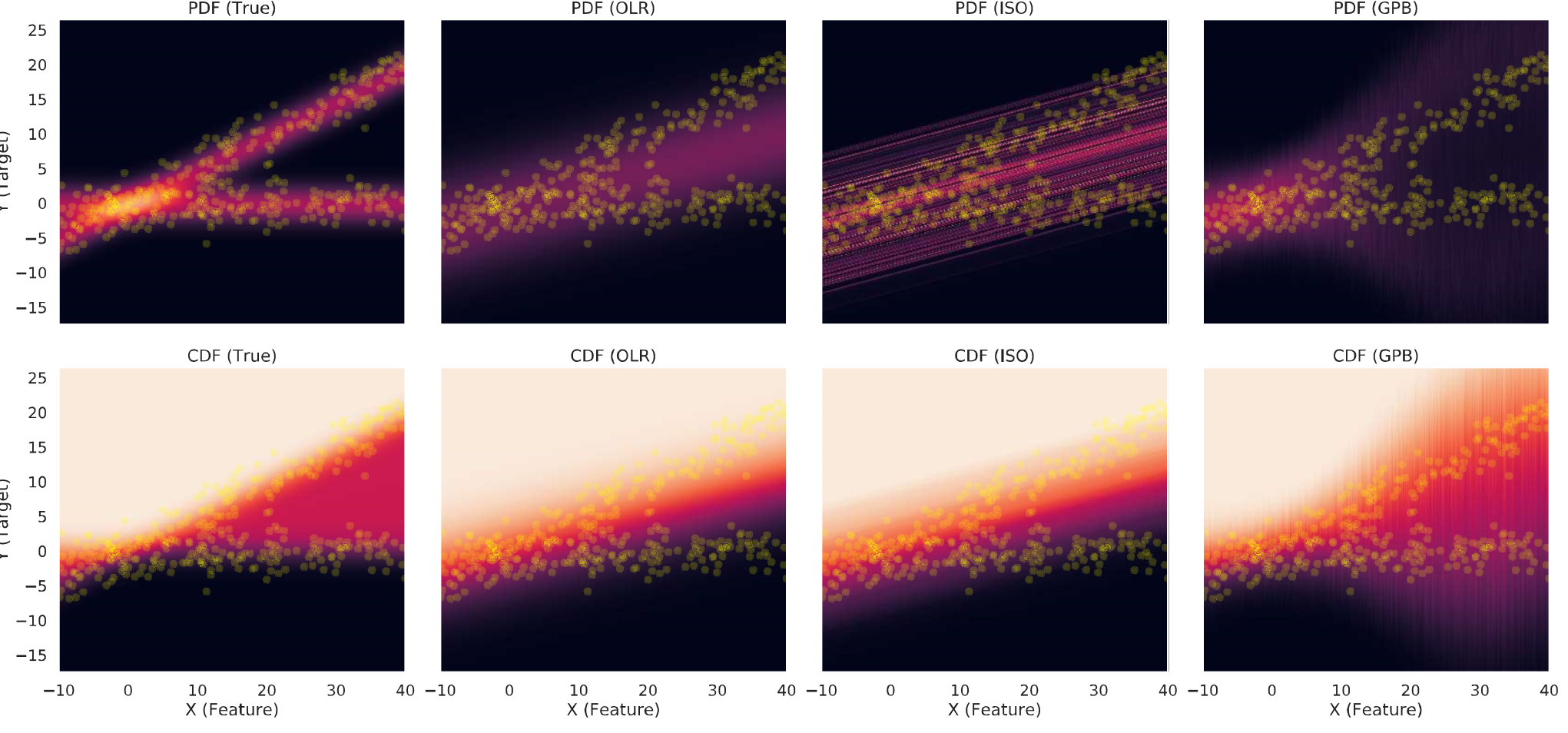}
    \caption{Applying quantile and distribution calibration on a synthetic dataset. The column on the left shows the true conditional PDF / CDF of the dataset, with the yellow points being the observed data. An ordinary linear regression is fitted to the data and the predictions are shown in the second column.
    An isotonic regression and a \GPB{} model are trained to calibrate the \ac{OLS} outputs on quantile and distribution level respectively, giving the right two columns.}
    \label{fig:toy_res}
\end{figure*}

\begin{figure*}[t]
    \centering
    \includegraphics[width=0.45\textwidth]{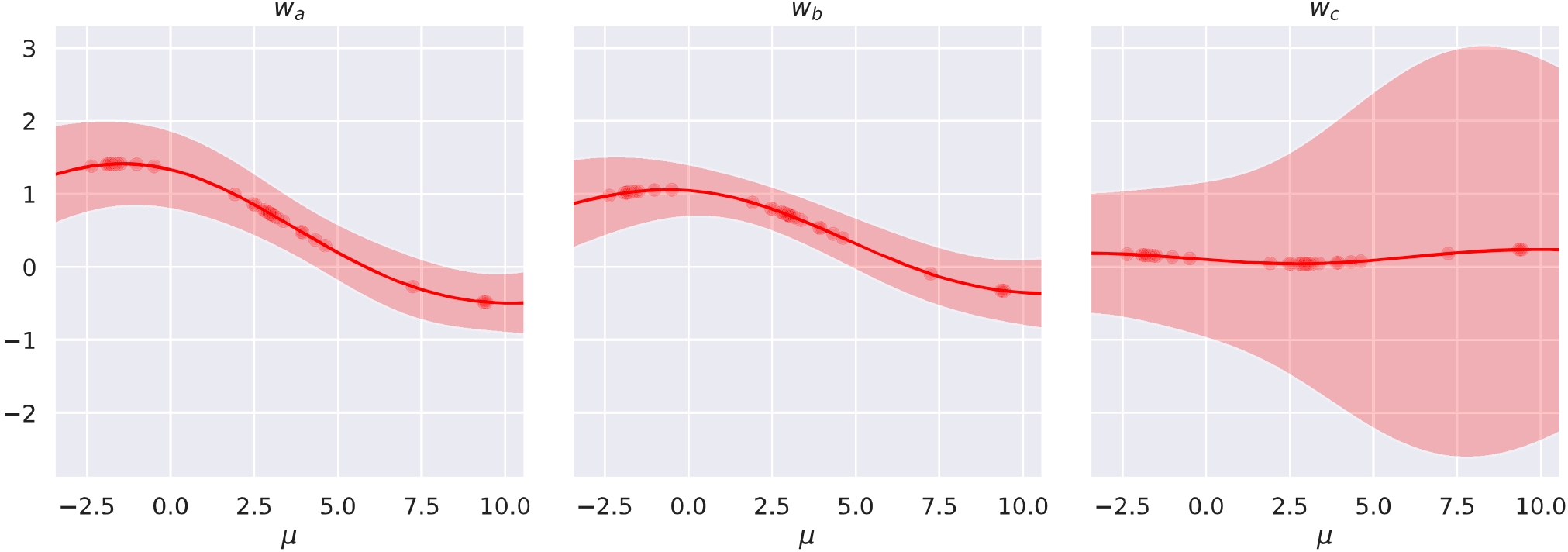}\hspace{2cm}
    \includegraphics[width=0.25\textwidth]{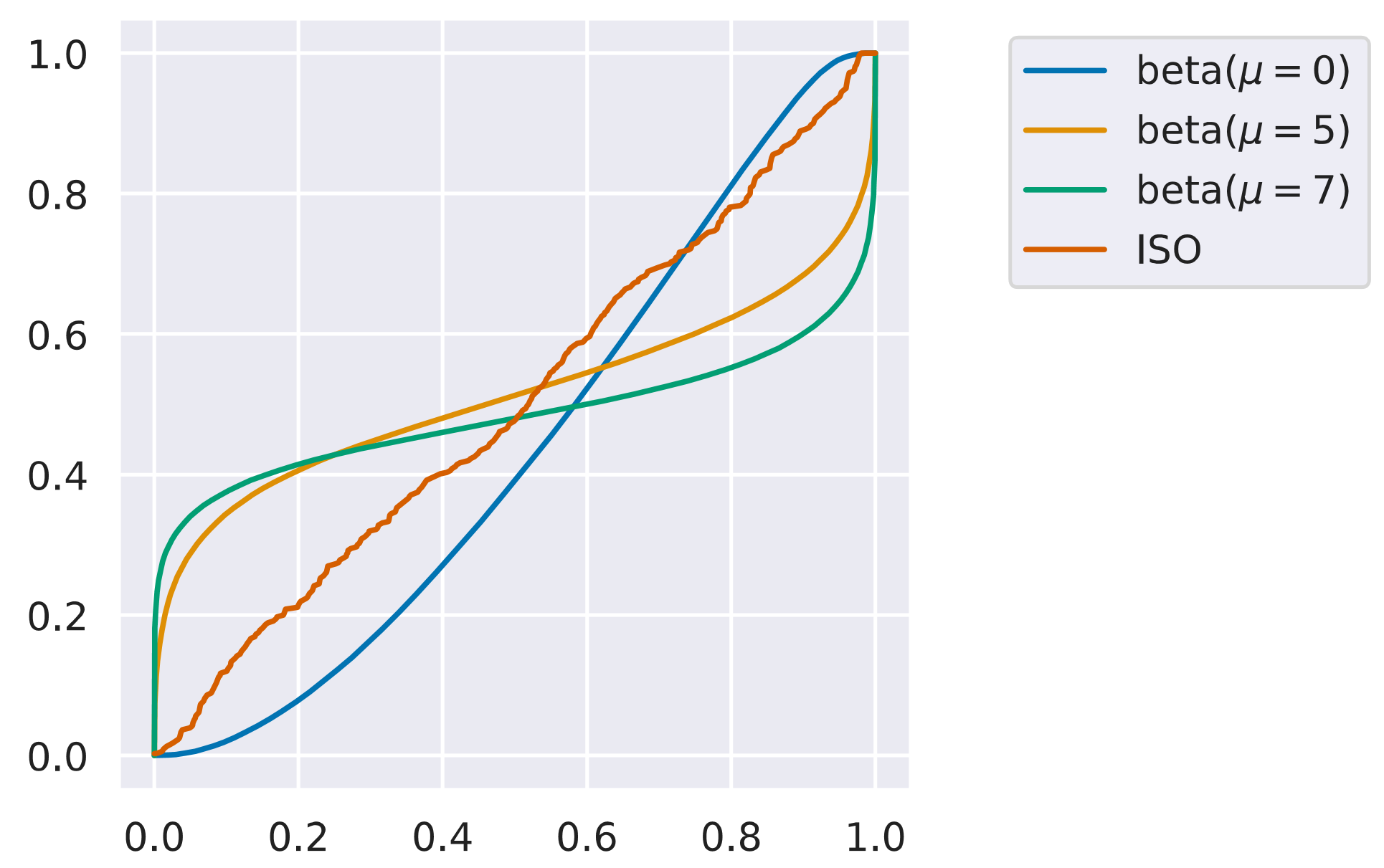}
    \caption{(\textbf{Left Three}) The latent Beta parameters modelled by the GP, with red points representing the inducing points. The red shade shows the area for one standard deviation at each side. Notice here the x-axis represent the mean value predicted by the \ac{OLS}, and hence corresponds to the y-axis in \Cref{fig:toy_res}. (\textbf{Right}) Calibration maps from \GPB{} and isotonic regression. \GPB{} is capable of predicting different calibration maps for different model output, the example shows three calibration maps at $0$, $5$, $7$ respectively. Isotonic regression only gives a single calibration map for all model outputs, and it only aims to calibrate the marginal quantiles.}
    \label{fig:GPB_example}
\end{figure*}

Our contributions are as follows:
\begin{enumerate}[topsep=0pt,itemsep=-1ex,partopsep=1ex,parsep=1ex]
    \item We introduce the concept of distribution calibration, and demonstrate that being calibrated on a distribution-level will naturally lead to calibrated quantiles. 
    %Measures to examine and verify such distribution calibrations are also given and discussed.
    \item We propose a multi-output \ac{GP}-based approach \cite{alvarez2009sparse} to solve the task of \posthoc{} density calibration. This approach models the distribution over calibration parameters, and uses a novel Beta link function to calibrate any given regression outputs. An example is demonstrated in \Cref{fig:GPB_example}.
    \item To ensure the scalability of the model, we further provide a solution based on stochastic variational inference together with induced pseudo-points.
    \item Finally, the proposed approach is experimentally analysed on different regression models including \ac{GP} regression \cite{Rasmussen2006} and \ac{BNN} regression \cite{Gal2016}.
\end{enumerate}

% Paper Structure
The rest of the paper is organised as follows.
In \Cref{sec:preliminaries} we introduce calibration in the context of both classification and regression, together with some related \posthoc{} calibration approaches. 
\Cref{sec:density_level_calibration} defines density-level calibration and discusses some theoretical properties.
The proposed calibration approach is described in \Cref{sec:method}.
Experimental analysis is shown in \Cref{sec:experiments} and \Cref{sec:conclusion} concludes.

\section{Background and Definitions}
\label{sec:preliminaries}

Throughout this paper, $X$ and $Y$ are random variables over spaces $\mathbb{X}$ and $\mathbb{Y}$, where $X$ represents the input features of an instance and $Y$ is the corresponding target value. 
In $k$-class classification $Y$ is categorical with ${\mathbb{Y} = \{1, \dots, k\}}$. % and in regression ${\mathbb{Y} \subseteq \mathbb{R}}$.

\Posthoc{} calibration applies if there is a (pre-trained) probabilistic model, which inputs the feature values and outputs a probability distribution over the target value. 
We use the notation $\mathsf{f}: \mathbb{X} \rightarrow \mathbb{S}_{\mathbb{Y}}$ to denote such a probabilistic model, where ${\mathbb{S}_{\mathbb{Y}}}$ is a space of probability distributions over $\mathbb{Y}$. For the classification task, ${\mathbb{S}_{\mathbb{Y}}}$ consists in vectors $\sb = [s_1, \dots, s_k]$, where $s_i$ denotes the probability of class $i$. Hence, $s_1,\dots,s_k\in[0,1]$ and $\sum_{j=1}^{k} s_i = 1$. 
%\MK{consider using $\hat{\mathbf{p}}$ instead of $\sb$}

%\tom{I like this para but not sure it belongs here}
The idea of \posthoc{} calibration is to learn a transformation, which takes in the probability distribution as output by the model, and transforms it so that the resulting probability distribution would be better calibrated. 
Intuitively, being calibrated means some kind of an agreement between predicted distribution and actual empirical distribution. 
Next, we will see how existing work has instantiated this intuitive notion for classification and regression, and propose a new notion of being distribution-calibrated for regressors.

\subsection{Calibration in Classification}

% \HS{Reduce text, keep definition, introduce and Beta and Isotonic only, example of calibration map.}

%Let us consider (pre-trained) probabilistic classifiers, defined as a class output a probability distribution over $k$ classes. 
%We use the notation $\mathsf{f}: \mathbb{X} \rightarrow \mathbb{S}_{\mathbb{Y}}$ to denote such a probabilistic classification model, where ${\mathbb{S}_{\mathbb{Y}}}$ is the space of all probability vectors $\sb = [s_1, \dots, s_k]$ with $\sum_{j=1}^{k} s_i = 1$.
% We will assume vectors with such bracket notation are column vectors. 
% any row vector will be denoted with a transpose such as $\sb^{\mathsf{T}}$. 
%A calibrated classifier can then be defined as follows.
%A common definition for classification calibration is as follows \cite{Kull2015}:
Classification calibration is defined as follows \cite{Kull2015}:

\begin{definition}[Calibrated Classifier] \label{def:class_cal}
Assume we have a pair of jointly distributed random variables ${(X, Y)}$ over $\mathbb{X}$ and ${\mathbb{Y}=\{1,\dots,k\}}$, and a model ${\mathsf{f}: \mathbb{X} \rightarrow \mathbb{S}_{\mathbb{Y}}}$.
Denoting ${S = \f(X)}$ as the random vector of predicted class probabilities, $\f$ is said to be calibrated iff ${\forall \sb = [s_1,\dots,s_k] \in \mathbb{S}_{\mathbb{Y}}}$, the following holds:
\begin{align}
    \Pr(Y=j \mid S = \sb) = s_j\,.
\end{align}
\end{definition}
Alternative definitions exist, e.g. \cite{Guo2017} require the accuracy on all instances with the same confidence level (highest predicted probability across classes) to agree with the confidence value.

As mentioned above, some models tend to give uncalibrated outputs due to certain computational heuristics or unrealistic assumptions.
Some \posthoc{} approaches are hence proposed to adjust such outputs to yield better calibrated probabilities.
Broadly speaking, most \posthoc{} approaches can be formalised as a calibration map $\mathsf{c}: \mathbb{S}_{\mathbb{Y}} \rightarrow \mathbb{S}_{\mathbb{Y}}$.
In the binary classification case, a calibration map can be seen as a function $\mathsf{c}: [0, 1] \rightarrow [0, 1]$, transforming the probability of class $1$, as the class $2$ is simply the complement. The calibration map can then be visualised in a unit square as in \Cref{fig:GPB_example} (Right). % \tom{think this figure should actually appear here if possible (possibly all toy figures?}
We hence proceed by introducing two illustrated calibration approaches for binary classification.

%\begin{figure}
%    \centering
%    \includegraphics[width=0.2\textwidth]{Figures/beta_example.png}
%    \caption{Calibration maps (Placeholder, will refine details later.)}
%    \label{fig:cal_map}
%\end{figure}

% \subsubsection{Isotonic Regression}
\textit{Isotonic calibration} is a powerful non-parametric method based on %(unweighted linear order) 
isotonic regression %and the PAV algorithm 
along with a simple iterative algorithm called \ac{PAV}, which finds the train-optimal regression line (calibration map) among all non-decreasing functions
%and it is one of the major non-parametric calibration methods 
\cite{Zadrozny2002,Fawcett2007}.
The method calibrates a model by recursively averaging neighbouring non-monotonic scores, so that a piece-wise constant non-decreasing calibration map is obtained at the end.  
%The constraints ensure that the resultant calibrated probabilities are non-decreasing. 
The main issue with isotonic calibration is that it is prone to overfitting on smaller datasets.
\textit{Beta calibration} \cite{Kull2017} is a recently proposed parametric approach for calibration of probabilistic two-class classifiers.
The method has been derived from the assumption that among all instances of any one of the classes, the predicted probability to belong to class $1$ is distributed according to a Beta distribution. 
%that is $\mathsf{p}(S_j = s_j \mid Y = j) = \mathsf{beta}(s_j ; \alpha_j, \beta_j)$, where $\mathsf{beta}$ is the Beta density function. 
Then, the calibration map as a transformation of the predicted probability to belong to class $1$, can be shown to have the following parametric form:
\begin{align}
%    \mathsf{p}(S_j = s_j \mid Y = j) &= \mathsf{b}(s_j ; \alpha_j, \beta_j) \\ \nonumber
    \mathsf{c}_{\beta}(\mathbf{s}) &= \mathsf{\Phi}\Big(a \: \mathsf{ln} \: s_1  - b \: \mathsf{ln} \: s_2 + c\Big)\,,
\end{align}
where %$\mathsf{beta}$ is the Beta density function with parameter $\alpha$ and $\beta$
$\mathsf{\Phi}(z) = \left(1 + e^{-z}\right)^{-1}$ is the logistic sigmoid function, $a$, $b$ and $c$ are parameters depending on the Beta parameters, as well as the marginal class distribution $\mathsf{P}(Y)$.
Beta calibration has the advantage of being naturally defined as a valid calibration map on the interval $[0, 1]$, while providing flexible shapes including sigmoids, inverse-sigmoids, and the identity map (\ie{} no transformation is applied), as shown in \Cref{fig:GPB_example} (right).
Experimentally, it has been shown to provide good calibration results on various binary models such as AdaBoost, \aclp*{SVM}, and \acp{NN} \cite{Kull2017,Kull2017b}.
%Logistic calibration makes an implicit assumption that the per-class scores are Gaussian distributed with equal variance. 
%However, it has been shown that many classifiers (\eg Na{\"i}ve Bayes, AdaBoost) suffer from a particular distortion where these score distributions are heavily skewed (\ie{} non-Gaussian). % \cite{Kull2017}. 
%Here logistic calibration can easily yield probability estimates that are less calibrated than the probabilities output by the classifier originally. Moreover, the logistic curve family does not include the identity function, and hence logistic calibration can even un-calibrate a perfectly calibrated classifier. 
%Beta calibration tackles these problems by using a richer class of calibration maps based on the beta distribution.
%Our proposed method in \Cref{sec:method} takes inspiration from beta calibration, using the same flexible parametric mapping function.

\subsection{Quantile-Calibrated Regression}

%While probabilistic classifiers output a probability distribution over classes, a probabilistic regression model outputs a probability distribution over the real numbers, usually represented by either a \ac{PDF} or \ac{CDF}. 
%For regression we assume $\mathbb{Y} = \mathbb{R}$.
% while other bounded continuous space is also applicable.
%A probabilistic regression model can then be defined as ${\mathsf{f}: \mathbb{X} \rightarrow \mathbb{D}_{\mathbb{Y}}}$,
%where $\mathbb{D}_{\mathbb{Y}}$ is the space of all possible density functions over $\mathbb{Y}$.
In regression, calibration has been traditionally addressed through quantiles \cite{Beran2005,Rueda2007,Taillardat2016,Fasiolo2017,Kuleshov18}. 
%\MK{is it 1 paper or a tradition? Either way, we need a reference here I think}
The goal of a quantile regression model is for a given instance with feature vector $\mathbf{x}$ and for a given quantile $\tau \in [0, 1]$, to find an estimate $\hat{y}$ such that $\mathsf{P}(Y_{\tau} \leq \hat{y} \mid X = \mathbf{x}) = \tau$.
%We hence define a quantile regression model as ${\mathsf{g}: \mathbb{X} \times [0, 1] \rightarrow \mathbb{R}}$.  
The definition of calibrated quantile regression can then be given as:
\begin{definition}[Quantile-Calibrated Regressor]
\label{def:q_cal_reg}
Suppose we have a pair of jointly distributed random variables ${(X, Y)}$ over $\mathbb{X}$ and ${\mathbb{Y} \subseteq \mathbb{R}}$, and a quantile regression model ${\mathsf{g}: \mathbb{X} \times [0, 1] \rightarrow \mathbb{Y}}$. 
Denoting ${G_{\tau} = \mathsf{g}(X, \tau)}$ as the random variable of the $\tau$-quantile predictions, $\mathsf{g}$ is said to be quantile-calibrated iff, ${\forall \tau \in [0, 1]}$, the following holds:
\begin{align}
\label{eq:quantile_calibration}
    \Pr(Y \leq G_{\tau}) = \tau\,.
\end{align}
\end{definition}
%
% \subsection{Approaches for \PostHoc{} Calibration}
Motivated by the above definition, \cite{Kuleshov18} introduces a quantile calibration map, which maps any quantile $\tau\in[0,1]$ into $\mathsf{c}(\tau)=\Pr(Y \leq G_{\tau})\in[0,1]$. After applying this calibration map, the obtained quantile regression model is indeed quantile-calibrated, as $\mathsf{g}_{\mathsf{cal}}(X,\mathsf{c}(\tau))=\mathsf{g}(X,\tau)$ and therefore, $\Pr(Y \leq\mathsf{g}_{\mathsf{cal}}(X,\mathsf{c}(\tau)))=\Pr(Y \leq\mathsf{g}(X,\tau))=\mathsf{c}(\tau)$.
To learn the mapping $\mathsf{c}$, \cite{Kuleshov18} use isotonic regression in the following way. 
Denoting by $\tau_i$ the quantile which corresponds to the actual target value $y_i$ of the training instance $\mathbf{x}_i$, the proposed method starts by collecting the empirical frequency $\bar{\tau_i} = n^{-1}{\sum_{j=1}^{n} \mathsf{I}(\tau_j \leq \tau_i)}$ (where $\mathsf{I}(\cdot)$ is the indicator function), that quantifies the proportion of instances receiving a \ac{CDF} value no greater than each given $\tau_i$.
Isotonic regression can then be applied to learn a mapping using the gathered pairs $(\tau_i, \bar{\tau}_i)_{i=1}^{n}$ to provide better calibrated quantiles according to the collected empirical frequency.

\begin{remark}[Global vs Local Calibration]
Comparing \Cref{def:class_cal} with \Cref{def:q_cal_reg}, we see that classification is defined through a conditional probability, while for quantile regression it is only through a marginal probability.
% As mentioned above, this brings a inconsistency between the two scenarios.
If a calibrated classifier provides a prediction with probability vector $\mathbf{s}$, then on average over all cases with the same prediction, the corresponding targets are distributed according to $\mathbf{s}$.  
In contrast, if a quantile-calibrated regression model provides a prediction with some empirical moments (\eg{} mean and variance), then we cannot claim that on average over all cases with the same predicted moments, the target variable would indeed have a distribution with the same true moments. % mean $\mu$ and standard deviation $\sigma$. 
% Instead, the agreement holds only when averaging over all different predictions.
This is because a marginal probability only considers the problem on a global level, which only guarantees the quantile to be calibrated averaged over all predictions.
This property becomes a disadvantage when the goal is to quantify the uncertainties on each individual prediction.
To mirror more closely the classification case, we next propose a stronger definition of calibration for regression. % in \Cref{sec:density_level_calibration}. 
\end{remark}

\section{Distribution-Calibrated Regression}
\label{sec:density_level_calibration}

The following definition originates from the same principle as \Cref{def:class_cal} for classification, in the sense that the distribution of the target variable $Y$ is required to agree with the output of the model conditioned on the output of the model.
We first choose ${\mathbb{S}_{\mathbb{Y}}}$ to consist of all possible \acp{PDF} over a real-valued target variable. 
Note that by this we are restricting ourselves to working with absolutely continuous distributions. 
%In this section we introduce the concept of distribution-calibrated regression models.
%While probabilistic classifiers output a probability distribution over classes, a probabilistic regression model outputs a probability distribution over the real-valued target variable. 
%We assume that this distribution is absolutely continuous, meaning that it can be represented both as a \ac{PDF} or as a \ac{CDF}.
%For regression we assume $\mathbb{Y} = \mathbb{R}$.
% while other bounded continuous space is also applicable.
%A probabilistic regression model can then be defined as ${\mathsf{f}: \mathbb{X} \rightarrow \mathbb{D}_{\mathbb{Y}}}$,
%where $\mathbb{D}_{\mathbb{Y}}$ is the space of all possible density functions over $\mathbb{Y}$.
%We denote the space of such distributions by $\mathbb{S}_{\mathbb{Y}}$, with $\mathsf{s} \in \mathbb{S}_{\mathbb{Y}}$ being a instance of PDF within the space.
%Next we define when a probabilistic regression model is distribution-calibrated, which is directly analogous to the definition of calibrated classifiers.
%We use the same notation $\mathsf{f}: \mathbb{X} \rightarrow \mathbb{D}_{\mathbb{Y}}$ to denote a probabilistic regression model (also called conditional distribution estimator) which outputs the distribution of the target variable instead of its point-estimate as usual regression models do. , where $\mathbb{D}_{\mathbb{Y}}$ represents the space of all possible density functions on $\mathbb{Y}$.
%
\begin{definition}[Distribution-Calibrated Regressor] \label{def:dens_cal_reg}
Suppose we have a pair of jointly distributed random variables ${(X, Y)}$ over $\mathbb{X}$ and ${\mathbb{Y} \subseteq \mathbb{R}}$, and a model ${\f: \mathbb{X} \rightarrow \mathbb{S}_{\mathbb{Y}}}$. 
Denoting ${S = \f(X)}$ as the random variable of model predictions, $\f$ is said to be distribution-calibrated if and only if ${\forall \mathsf{s} \in \mathbb{S}_{\mathbb{Y}}}$, ${\forall y \in \mathbb{Y}}$, the following equality holds:
\begin{align}
\label{eq:density_calibration}
    \pr(Y=y \mid S = \mathsf{s} ) = \mathsf{s}(y)\,.
\end{align}
\end{definition}
%
%This definition follows the same principle as in classification, in the sense that the distribution of the target variable $Y$ agrees with the output of the model when conditioned on the output of the model.
%Calibration is now defined through a conditional probability density given a particular prediction of $\d$, which follows the same principle as in classification.
In particular, this definition implies that if a calibrated model predicts a distribution with some mean $\mu$ and variance $\sigma^2$, then it means that on average over all cases with the same prediction the mean of the target is $\mu$ and variance is $\sigma^2$.

Next, we show that if the probabilistic regression model is distribution-calibrated then for any $\tau\in[0,1]$ extracting the $\tau$-quantile from the output distribution results in a quantile-calibrated regressor.
\begin{theorem}%[Density calibrated is sufficient for quantile calibrated.]
Let ${\f: \mathbb{X} \rightarrow \mathbb{S}_{\mathbb{Y}}}$ be a distribution-calibrated probabilistic model, and let  
${\mathsf{g}: \mathbb{X} \times [0, 1] \rightarrow \mathbb{Y}}$ be a quantile regressor defined by $\mathsf{g}(\mathbf{x},\tau)=y$ such that 
$\mathsf{P}_{\f(\mathbf{x})}(Y \leq y)=\tau$ where $\mathsf{P}_{\f(\mathbf{x})}$ is the probability measure corresponding to the distribution $\f(\mathbf{x})$.
Then $\mathsf{g}$ is quantile-calibrated.
%Using the notation above, ${\forall \tau \in [0, 1]}$, ${\forall \q \in \mathbb{Q}_{\mathbb{Y}}}$, ${\Pr(Y \leq G_{\tau}) = \tau}$ is satisfied whenever ${\Pr(Y \leq t \mid Q = \q ) = \q(t)}$ is satisfied.
\end{theorem}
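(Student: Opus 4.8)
The plan is to unwind both definitions and reduce the claim to an application of the law of total probability (tower rule). Fix $\tau \in [0,1]$. We must show $\Pr(Y \leq G_\tau) = \tau$, where $G_\tau = \mathsf{g}(X, \tau)$. The starting point is to condition on the model prediction $S = \f(X)$: by the tower rule,
\begin{align}
\Pr(Y \leq G_\tau) = \mathsf{E}_S\big[\Pr(Y \leq G_\tau \mid S)\big]\,.
\end{align}
The key observation is that $G_\tau$ is a deterministic function of $S$ alone: by construction, $\mathsf{g}(\mathbf{x}, \tau)$ depends on $\mathbf{x}$ only through the output distribution $\f(\mathbf{x})$, since it is defined as the value $y$ with $\mathsf{P}_{\f(\mathbf{x})}(Y \leq y) = \tau$. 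So conditioning on $S = \mathsf{s}$ pins down $G_\tau = \mathsf{g}_\mathsf{s}(\tau)$, the $\tau$-quantile of the distribution $\mathsf{s}$.

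Next I would use distribution calibration (\Cref{def:dens_cal_reg}). Conditioned on $S = \mathsf{s}$, the density of $Y$ is exactly $\mathsf{s}$, hence the conditional CDF of $Y$ given $S = \mathsf{s}$ equals $\mathsf{P}_{\mathsf{s}}(Y \leq \cdot)$. Therefore
\begin{align}
\Pr(Y \leq G_\tau \mid S = \mathsf{s}) = \mathsf{P}_{\mathsf{s}}\big(Y \leq \mathsf{g}_\mathsf{s}(\tau)\big) = \tau\,,
\end{align}
where the last equality is the defining property of the quantile $\mathsf{g}_\mathsf{s}(\tau)$ (this uses absolute continuity of $\mathsf{s}$, so that the CDF is continuous and attains the value $\tau$ exactly). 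Since this holds for every $\mathsf{s}$ in the support of $S$, taking the expectation over $S$ gives $\Pr(Y \leq G_\tau) = \mathsf{E}_S[\tau] = \tau$, which is \Cref{eq:quantile_calibration}. As $\tau$ was arbitrary, $\mathsf{g}$ is quantile-calibrated.

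The main obstacle is not the probabilistic argument, which is essentially a one-line tower-rule computation, but rather being careful about measure-theoretic well-definedness: one must check that $\mathsf{g}(\mathbf{x}, \tau)$ is genuinely a function of $\f(\mathbf{x})$ (so that $G_\tau$ is $\sigma(S)$-measurable), and that the quantile is well-defined and satisfies $\mathsf{P}_{\mathsf{s}}(Y \leq \mathsf{g}_{\mathsf{s}}(\tau)) = \tau$ — this is where the restriction to absolutely continuous distributions in $\mathbb{S}_{\mathbb{Y}}$ does the real work, since for distributions with atoms the quantile function need not invert the CDF exactly. I would state this continuity point explicitly rather than gloss over it.
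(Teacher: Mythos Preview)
Your proposal is correct and follows essentially the same approach as the paper's proof: condition on $S=\f(X)$, observe that $G_\tau$ is determined by $S$, use distribution calibration to identify the conditional law of $Y$ given $S=\mathsf{s}$ with $\mathsf{s}$ itself so that $\Pr(Y\leq G_\tau\mid S=\mathsf{s})=\tau$, and then average over $\mathsf{s}$. You are slightly more explicit than the paper in framing the last step as the tower rule and in flagging the $\sigma(S)$-measurability of $G_\tau$, but the argument is the same.
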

\begin{proof}
First let us prove that $\mathsf{g}$ exists. Since $\f(\mathbf{x})$ is absolutely continuous then its \ac{CDF} is continuous. As it is monotonic in the range $[0,1]$ it achieves all values, including $\tau$. Therefore, the required $y$ exists for any $\mathbf{x}$ and $\tau$ and so $\mathsf{g}$ is correctly defined. It remains to prove that ${\Pr(Y \leq \mathsf{g}(X,\tau)) = \tau}$ for any $\tau\in[0,1]$. As $\f$ is distribution-calibrated then for any $\mathsf{s}\in\mathbb{S}_{\mathbb{Y}}$ we get $\Pr(Y=y\mid \f(X)=\mathsf{s})=\mathsf{s}(y)$ for any $y$. Combining this over all $y\leq \mathsf{g}(X,\tau)$ and considering that $\mathsf{g}(X,\tau)$ is uniquely determined by $\f(X)$, we obtain $\Pr(Y\leq \mathsf{g}(X,\tau)\mid \f(X)=\mathsf{s})=\mathsf{P}_{\f(X)}(Y\leq \mathsf{g}(X,\tau))$ which is equal to $\tau$ due to the definition of $\mathsf{g}$. Since $\Pr(Y\leq \mathsf{g}(X,\tau)\mid \f(X)=\mathsf{s})=\tau$ for any $\mathsf{s}$ then ${\Pr\Big(Y\leq \mathsf{g}(X,\tau)\Big)=\tau}$ implying that $\mathsf{g}$ is quantile-calibrated.
% 
% With \Cref{eq:cdf_quantile}, we have:
% %
% \begin{align*}
% \Pr(Y \leq \q^{-1}(\tau) \mid Q=\q) = \tau    
% \end{align*}
% %
% Therefore, we can derive:
% %
% \begin{align*}
%     \Pr(Y \leq G_{\tau}) &= \int_{\mathbb{Q}_{\mathbb{Y}}} \Pr(Y \leq \mathsf{q}_{\tau}^{-1} \mid Q = \q) \: \pr (Q=\q) \: d\q \\
%     &= \tau \int_{\mathbb{Q}_{\mathbb{Y}}} \pr(Q = \q) \: d\q \\
%     &= \tau
% \end{align*}
\end{proof}
On the other hand, since $\mathsf{P}(Y \leq G_\tau) = \tau$ doesn't provide any information about $\mathsf{P} (Y \leq G_\tau\mid \f(X)=\mathsf{s})$, we don't necessarily get a distribution-calibrated model from a quantile calibrated model.

% \HS{I'll put below in common text other than theorem, not that novel.}

While we have provided the definition of distribution calibration and shown its relationship to quantile calibration, we now demonstrate the quantitative benefits for any model being calibrated on a distribution level. 
The metric we adopt is the \ac{NLL}, also known as the log-loss in the context of proper scoring rules \cite{gneiting2007strictly}.

%\begin{theorem}%[Density calibrated is sufficient for quantile calibrated.]
%Assume we have a pair of jointly distributed random variables ${(X, Y)}$ over space $\mathbb{X}$ and ${\mathbb{Y} \subseteq \mathbb{R}}$, and a pre-trained model ${\f: \mathbb{X} \rightarrow \mathbb{S}_{\mathbb{Y}}}$. 
%Then the expected \ac{NLL} $\mathsf{E}_{(\mathbf{X}, Y)}[- \mathsf{ln} \: \mathsf{s}_{\mathbf{x}}(y)]$ is minimised, iff, the model $\mathsf{f}$ is distribution calibrated. 
%\end{theorem}

%\begin{proof}
As shown in \cite{Kull2015}, the expected NLL can be decomposed into the following two terms:
\par\nobreak\vspace{-\baselineskip}
{\small
\begin{align}
    &\mathsf{E}_{(\mathbf{X}, Y)}\Big[- \mathsf{ln} \: \mathsf{s}_{\mathbf{x}}(y)\Big] = \\ \nonumber &\mathsf{E}_{\mathbf{X}}\Big[\mathsf{KL}\Big(\mathsf{p}( Y \mid \mathsf{s}_\mathbf{x}) , \mathsf{s}_{\mathsf{x}}(Y)\Big)\Big] + \mathsf{E}_{(\mathbf{X}, Y)}\Big[- \mathsf{ln} \: \mathsf{p}( y \mid \mathsf{s}_\mathbf{x})\Big] \,,
\end{align}
}%
where
\par\nobreak\vspace{-\baselineskip}
{\small
\begin{align*}
    \mathsf{KL}\Big(\mathsf{p}( Y \mid \mathsf{s}_\mathbf{x}) , \mathsf{s}_{\mathsf{x}}(Y)\Big) = \int_{y} \mathsf{p}( Y \mid \mathsf{s}_\mathbf{x}) \: \mathsf{ln} \frac{\mathsf{p}( Y \mid \mathsf{s}_\mathbf{x})}{\mathsf{s}_{\mathsf{x}}(y)} \mathsf{d} \: y \,.
\end{align*}
}%
%\end{proof}
%
The first term is commonly known as the calibration loss and the latter term is the so called refinement loss.
Once a model is trained, the latter term becomes fixed, as the distribution over $S$ is learnt.
Therefore, the whole expectation can be minimised if and only if the formal $KL$ divergence becomes $0$ everywhere, which indicates calibrated distributions as we defined.
In general, distribution-level calibration ensures we have the most accurate uncertainty from the model predictions of the targets receiving the same prediction.
Such calibration properties allow us to make optimal decisions for each individual prediction. 
%Furthermore, while quantile-level calibration can only address the probability for the target being within a interval, density calibration can further answer questions such as a accurate density ratio between two target values.

%From an evaluation point of view, fixing a trained model, having such calibration properties will yield optimised predictive log-likelihood (hence a $0$ Kullback-Leibler divergence between the predicted and true conditional PDF) as well as \ac{MSE}. 
%Fixing a trained model $\mathsf{f}$, denoting the density predicted by $\mathsf{f}$ as $\mathsf{d}$, we can easily show, given $ \forall \hat{\mathsf{d}}$ :
%
%\begin{align*}
%    \mathsf{KL}\Big( \hat{\mathsf{d}}\: ; \: \mathsf{p}(Y \mid D=\mathsf{d}) \Big) \geq 0
%\end{align*}
%
% And
%
%\begin{align*}
%    \mathsf{MSE}\Big(Y \mid D=\mathsf{d}\: ; \: \mathsf{E}_{\hat{\mathsf{d}}} (Y) \Big) \geq \mathsf{var}_{\mathsf{d}}(Y \mid D=\mathsf{d}) 
%\end{align*}
%
%In both cases, the equity only holds if $\hat{\mathsf{d}} = \mathsf{d}$.
%While here we only consider a single $\mathsf{d}$, notice the marginal loss is the expected loss over $\mathsf{p}(\mathsf{d})$, as for a density calibrated model the calibration property holds for $\forall d \in \mathbb{D}_{\mathbb{Y}}$, thus the marginal loss is also minimised only when we have a calibrated model.
%Here we skip the detailed proof for simplicity, refer to \cite{Kull2015} for a more detailed analysis in the context of Proper Scoring Rules.

\section{Methodology}%The \GPB{} Approach}
\label{sec:method}

%In this section we provide the ingredients to the proposed \GPB{} approach, aiming to provide better calibrated distributions for any given regressor. 
The proposed idea is to use Beta calibration maps to transform CDFs of the distributions output by the regressor, similarly as isotonic maps are used by \cite{Kuleshov18}. 
Unlike \cite{Kuleshov18}, we learn a \ac{GP} to predict the parameters $a,b,c$ of the Beta calibration map from the mean and variance as predicted by the regressor. 
Let us now look into these steps in more detail.

\subsection{Beta link function for regression}
We first adopt the parametric Beta calibration map family \cite{Kull2017, Kull2017b} as a tool to calibrate the \ac{CDF} of any regression output, by transforming quantiles with a beta calibration map $[0,1]\to[0,1]$. 
As the Beta family contains the identity map ($a=1,b=1,c=0$), the regression output can remain the same, if already calibrated. 
Changing $c$ pushes the distribution to the left ($c>0$) or right ($c<0$). Sigmoids ($a,b>1$) decrease the variance of the regression output distribution, while inverse sigmoids ($a,b<1$) increase the variance. 
Changing the balance between $a$ and $b$ makes the distribution skewed to the left ($a<b$) or right ($a>b$).

Beta calibration map applies to the CDFs, while in order to later learn a GP we will need to know the transformation as a link function which directly applies to the PDFs. 
To derive this Beta link function, we need to differentiate the new CDF obtained after applying Beta calibration map $\mathsf{c}_{\beta}$. 
Denoting the quantile by $\mathsf{q}_y$, the differentiation results in the following:
\begin{align}
    \frac{\mathsf{d} \: \mathsf{c}_{\beta}\left(\mathsf{q}_y\right)}{\mathsf{d} \: y} & = \frac{\mathsf{d} \: \mathsf{c}_{\beta}\left(\mathsf{q}_y\right)}{\mathsf{d} \: \mathsf{q}_y}\frac{\mathsf{d} \: \mathsf{q}_y}{\mathsf{d} \: y}
    =  \mathsf{r}_{\beta} \left(\mathsf{q}_y\right) \mathsf{s}_y \,,
\end{align}
where $\mathsf{r}_{\beta} (\mathsf{q})$ is the link function that we were looking for:
\par\nobreak\vspace{-\baselineskip}
{\small
\begin{align}
    \mathsf{r}_{\beta} (\mathsf{q}_y) & = \frac{\mathsf{d} \: \mathsf{\Phi}\Big(a \: \mathsf{ln} \: \mathsf{q}_y  - b \: \mathsf{ln} \: (1-\mathsf{q}_y) + c\Big)}{\mathsf{d} \: \mathsf{q}_y}\\ \nonumber
    & = \frac{\mathsf{q}^{a}_y(1-\mathsf{q}_y)^{b} e^{-c}\left(a - (a-b)\mathsf{q}_y\right)}{\mathsf{q}_y(1-\mathsf{q}_y)\left(\mathsf{q}^{a} + (1-\mathsf{q}_y)^{b} e^{-c}\right)^2} \,.
\end{align}
}%
Here $a$, $b$ and $c$ are the parameters of Beta calibration.
This link function acts as a density ratio between the calibrated PDF and the original PDF as output by the regressor.

%from which we derive a link function to directly evaluate the target density after calibration.

%\subsection{The Beta Link Function}

%As introduced above, under a distribution setting, any monotonic function satisfying $\mathsf{c}: [0,1] \rightarrow [0,1] $ can be used to calibrate a \ac{CDF}.
%We therefore propose to the adopt Beta calibration in the regression setting, given the flexibility demonstrated in the classification case.

%Compared to isotonic regression, Beta calibration has the advantage of being differentiable, leading to an equivalent calibration on the \ac{PDF}:
%
%\begin{align}
%    \frac{\mathsf{d} \: \mathsf{c}_{\mathsf{b}}\left(\mathsf{q}_y\right)}{\mathsf{d} \: y} & = \frac{\mathsf{d} \: \mathsf{c}_{\mathsf{b}}\left(\mathsf{q}_y\right)}{\mathsf{d} \: \mathsf{q}_y}\frac{\mathsf{d} \: \mathsf{q}_y}{\mathsf{d} \: y} \\ \nonumber
%    & =  \mathsf{r}_{\mathsf{b}} \left(\mathsf{q}_y\right) \mathsf{s}_y
%\end{align}
%
%With $\mathsf{r}_{\beta} (\mathsf{q})$ being:
%
%\begin{align}
%    \mathsf{r}_{\mathsf{b}} (\mathsf{q}_y) = \frac{\mathsf{q}^{a}_y(1-\mathsf{q}_y)^{b}e^{-c}\left(a - (a-b)\mathsf{q}_y\right)}{\mathsf{q}_y(1-\mathsf{q}_y)\left(\mathsf{q}^{a} + (1-\mathsf{q}_y)^{b}e{-c}\right)^2}
%\end{align}
%
%Here $a$, $b$ and $c$ are the parameters of Beta calibration.

%Therefore, applying a Beta calibration on the \ac{CDF} can then be seen as multiplying a density ratio $\mathsf{r}_{\beta}$ with the \ac{PDF}.
A sufficient and necessary condition for the ratio to be non-negative is $a \geq 0$ and $b \geq 0$, which is the same condition as required to have a monotonically increasing Beta calibration map $\mathsf{c}_{\beta}$.
Similarly, the parameter setting $a=1$, $b=1$ and $c=0$ gives a constant ratio of $1$ (\eg{} no adjustment on the \ac{PDF}), corresponding to the identify calibration map of $\mathsf{c}_{\beta}$.
Furthermore, as it is defined over the \ac{CDF}, $\mathsf{r}_{\beta}$ has the advantage of always yielding normalised distribution after the multiplication, which can otherwise only be achieved through constrained optimisation for common models in the field of density ratio estimation \cite{Sugiyama2012}. 

\subsection{The \GPB{} Model}

With $\mathsf{r}_{\beta}$ being defined as above, intuitively, the next step to achieve distribution calibration is to construct a model that maps any regression output $(\mu_i, \sigma_i)$ into a set of Beta calibration parameters $(a_i, b_i, c_i)$. 
However, the main challenge of this approach is that, during training, for each $(\mu_i, \sigma_i)$ we normally only observe a single target value $y_i$, which is not enough to learn a good calibration map on the whole distribution.
Therefore, we seek to borrow the observed values from other regression outputs that are close to $(\mu_i, \sigma_i)$, which leads to the choice of the \ac{GP}, a widely adopted non-parametric method within the Bayesian framework.

The proposed model can be formalised in the following way.
We first assume, that there are three latent functions that are jointly distributed with respect to a multi-output \ac{GP} \cite{alvarez2009sparse,skolidis2011,moreno2018}, corresponding to the parameters $a,b,c$ of Beta calibration:
\begin{align}
    (\mathsf{w}_a, \mathsf{w}_b, \mathsf{w}_c) \sim \gp(\mathsf{0}, \mathsf{k}, \B) \,,
\end{align}
where ${\mathsf{k}}$ is the kernel (covariance) function on the regression output distributions, and $\B$ is a $3 \times 3$ coregionalisation matrix modelling the covariance among the outputs.
% $\mathsf{w}_a$, $\mathsf{w}_b$ and $\mathsf{w}_c$.

%\MK{shouldn't we explain the relationship between $\mathsf{w}_a, \mathsf{w}_b, \mathsf{w}_c$ and $a,b,c$ already here?}
%
%Then we construct a \ac{GP} which models the parameters of the Beta calibration as a latent function, while defining a covariance function on all regression outputs through kernel mean embedding \cite{Song2008, Gretton2012, Muandet2017}.
%Finally, we discuss the potential inference approaches for the proposed model, and propose a scalable inference scheme based on pseudo inducing points \cite{Snelson2006,Naish2008,Hensman2015}, (stochastic) variational inference \cite{Nickisch2008,Hoffman2013,Hensman2015,Blei2017}, and path-wise Monte-Carlo gradients \cite{Ruiz2016}.

Regarding the choice of $\mathsf{k}$, here we refer to the well developed area of kernel mean embedding \cite{Muandet2017}.
The idea is to use a kernel function to map each distribution into a \ac{RKHS}, the embedding can then be applied to problems like the two sample test \cite{Gretton2012} and distribution regression \cite{Mitrovic2016}.
Notice that in distribution regression the task is to predict a target variable from a distribution variable, in our model the task is further generalised to infer a distribution over functions from a distribution variable, for the purpose of calibration. 

Calculating the embedding/kernel value generally requires Monte-Carlo samples from the candidate distributions, but can also be analytic under certain combinations of distributions.
Here we choose the univariate Gaussian embedding with \ac{RBF} kernel \cite{Song2008}:
\par\nobreak\vspace{-\baselineskip}
{\small
\begin{align}
    &\mathsf{k}\Big((\mu_1, \sigma_1), (\mu_2, \sigma_2) \Big) = \frac{\theta}{|\sigma_1 + \sigma_2 + \theta^2|^\frac{1}{2}} e^{\left(- \frac{(\mu_1 - \mu_2)^2}{2(\sigma_1 + \sigma_2 + \theta^2)} \right)} \,.
\end{align}
}%
Observe that if we set $\sigma_1=\sigma_2=0$, the kernel reduces to a \ac{RBF} kernel defined over $(\mu_1, \mu_2)$.
%\tom{LHS doesn't cancel?? Should be $\frac{\theta}{\theta}$ no?}

Given $n$ training points $(\bm{\mu}, \bm{\sigma}) = {\Big((\mu_1, \sigma_1), \dots, (\mu_n, \sigma_n)\Big)}$, a Gaussian likelihood on $(w_a^{(i)}, w_b^{(i)}, w_c^{(i)})_{i=1}^{n}$ can then be written as:
\begin{align}
    \pr \left( \begin{bmatrix} \mathbf{w}_a \\ \mathbf{w}_b \\ \mathbf{w}_c \end{bmatrix} \Bigg | \: \bm{\mu}, \bm{\sigma} \right) = \mathsf{N} \left( \begin{bmatrix} \mathbf{w}_a \\ \mathbf{w}_b\\ \mathbf{w}_c \end{bmatrix} \Bigg | \: \zeros, \B \otimes \K \right) \,,
\end{align}
where $\mathsf{N}$ is the likelihood function of multivariate Gaussian, $\K$ is the $n$ by $n$ kernel matrix obtained by applying $\mathsf{k}$ on $(\bm{\mu}, \bm{\sigma})$.
$\mathbf{B}$ is the coregionalisation matrix introduced above, while $\otimes$ denotes the Kronecker product.
$\mathbf{w}_a = [w_a^{(1)}, \dots, w_a^{(n)}]$ and similar for $\mathbf{w}_b$, $\mathbf{w}_c$.

While the form above gives a clear representation of the coregionalisation structure, we use $\C = \K \otimes \B$ for the rest of the paper for convenience, 
\par\nobreak\vspace{-\baselineskip}
{\small
\begin{align*}
    \pr( \mathbf{w} \mid  \bm{\mu}, \bm{\sigma}) &= \mathsf{N} ( \wb  \mid  \zeros, \C )\,, \\
    %\wb &= \begin{bmatrix} \wb_1 \\ \vdots \\ \wb_m \end{bmatrix}, \quad \wb_{i} = [w_a^{(i)}, w_b^{(i)}, w_c^{(i)}]\,.
    \wb &= \begin{bmatrix} \wb_1^\mathsf{T}, \ldots, \wb_m^\mathsf{T} \end{bmatrix}^\mathsf{T}, \quad \wb_{i} = [w_a^{(i)}, w_b^{(i)}, w_c^{(i)}]^\mathsf{T}\,.
\end{align*}
}%
For target values $\mathbf{y} = (y_1, \dots, y_n)$, we can now plug in our Beta link:
\par\nobreak\vspace{-\baselineskip}
{\small
\begin{align*}
\label{eq:marginal_likelihood}
    &\pr(\yb \mid \bm{\mu}, \bm{\sigma}) = \int_{\wb} \Big(\prod_{i=1}^{n}\pr(y_i \mid \mathbf{w}_i, \mu_i, \sigma_i) \Big) \: \pr(\wb \mid \bm{\mu}, \bm{\sigma}) \: d\wb \,, \\
    &\mathsf{p}(y_i \mid \mathbf{w}_i, \mu_i, \sigma_i) = \mathsf{s}_{y_i} \: \mathsf{r}_{\beta}^{(i)}(\mathsf{q}_{y_i}) \,,
\end{align*}
}
where $\mathsf{q}_{y_i}$ and $\mathsf{s}_{y_i}$ represent the Gaussian \ac{PDF} value and \ac{CDF} value at $y_i$ given $\mu_i$ and $\sigma_i$, $\mathsf{r}_{\beta}^{(i)}$ is the link with parameters $a_i$, $b_i$, and $c_i$ given as:
\begin{align*}
    a_i &= e^{(\gamma_a^{-1} \: w_a^{(i)} + \delta_a)} \,, \\
    b_i &= e^{(\gamma_b^{-1} \: w_b^{(i)} + \delta_b)} \,, \\
    c_i &= \gamma_c^{-1} \: w_c^{(i)} + \delta_c \,.
\end{align*}
The exponential function enforces the non-negative constraints on $a$ and $b$.
The hyperparameters $(\gamma_\cdot, \delta_\cdot)$ control the link function at the $\mathbf{0}$-mean \ac{GP} prior.
%To see this, recall that the prior is our belief without seeing any data.
For distribution calibration, a reasonable prior is to use the identity calibration map, indicating that we should not adjust the density functions before seeing any data.
However, while at the prior we have $\left(-e^{\mathsf{E}(w_a^{(i)})} = 1, e^{\mathsf{E}(w_b^{(i)})}=1, \mathsf{E}(w_c^{(i)})=0\right)$ corresponding to the identity map, the Gaussian variance together with the non-linear transform will distort the calibration map after marginalising $\mathbf{w}_i$.
While this distortion cannot be prevented analytically, we use the hyperparameters above to reduce the level of distortion, and optimise them during training time, in the spirit of empirical Bayes methods \cite{robbins1985empirical}.

\subsection{Scalable Inference}

% \HS{Justify VI (factorisation, sparse, intractable link), introduce bound from Hensman 2015. Add a algorithm for MC gradient.}
We have defined $\pr (\yb \mid \bm{\mu}, \bm{\sigma})$ in \Cref{eq:marginal_likelihood}; however the integral is analytically intractable due to the non-linearity in the link function, which makes optimising the hyperparameters challenging.
Furthermore, given a test instance $(\mu_{\star}, \sigma_{\star})$, the calibrated density value at $\mathsf{s}_{\star}(y)$ are also intractable:
\par\nobreak\vspace{-\baselineskip}
{\small
\begin{align*}
    &\mathsf{\hat{s}}_{\star}(y) = %\\ \nonumber
    &\int_{\wb_{\star}} \int_{\wb} \mathsf{r}_{\beta}^{(\star)} \mathsf{s}_{\star}(y) \pr (\wb_{\star} \mid \wb)  \pr (\wb \mid \mathbf{y}, \bm{\mu}, \bm{\sigma}) d\wb  d\wb_{\star} \,,
\end{align*}
}%
%
%which gives two further intractable integrals over $\wb_{\star}$ and $\wb$, as well as an intractable posterior distribution: 
%
\begin{align*}
    % \pr(\wb \mid \mathbf{y}, \bm{\mu}, \bm{\sigma}) = \frac{\Big(\prod_{i=1}^{n}\pr(y_i \mid \mathbf{w}_i, \mu_i, \sigma_i) \Big) \: \pr(\wb \mid \bm{\mu}, \bm{\sigma})}{\pr (\yb \mid \bm{\mu}, \bm{\sigma})} \,,
    \pr(\wb \mid \mathbf{y}, \bm{\mu}, \bm{\sigma}) \propto \Big(\prod_{i=1}^{n}\pr(y_i \mid \mathbf{w}_i, \mu_i, \sigma_i) \Big) \: \pr(\wb \mid \bm{\mu}, \bm{\sigma}) \,,
\end{align*}
Finally, operations on the kernel matrix also present computational challenges as the number of data points grows. 

These issues are analogous to the ones seen in \ac{GP} classification, which also needs to integrate over a Gaussian likelihood to get a non-Gaussian likelihood, as well as dealing with computations on the kernel matrix. 

%To deal with the intractable posterior, common solutions include: sampling \cite{gamerman2006markov}, \ac{EP} \cite{Minka2001}, variational inference \cite{Wainwright2008,Blei2017}, and Laplace approximations \cite{Tierney1986}.
%In this paper we apply the variational inference approach for the following reasons.
%As discussed in \cite{Titsias2008}, while \ac{MCMC} typically provides more accurate solutions, it can be significantly slower than other approaches based on numerical optimisation when the link function can factorise into each data point.
%While working notably well in \ac{GP} classification, \ac{EP} requires matching the two moments to find the site parameters, which is also intractable given our link function.
%Laplace approximation can be attractive for small datasets as it is only requires optimising the un-normalised posterior.
%However, once the dataset becomes large, additional approximations are required, such as the pseudo-points introduced in \cite{Snelson2006}, meaning that finding the MAP solution is no longer straight-forward.

% \tom{Return to this section}

We therefore introduce the scalable inference scheme as proposed in \cite{Hensman2015}, together with the Monte-Carlo gradients approach to address the intractable integration on the link function.
The inference scheme starts with placing a number of $m$ induced pseudo points as in \cite{Snelson2006,Naish2008}, denoting as $(\bm{\mu}_{\mathbf{u}}, \bm{\sigma}_{\mathbf{u}})$, from which we obtain a Gaussian prior $\mathsf{N}(\mathbf{u} \mid \mathbf{0}, \mathbf{C}_{u})$, with $\mathbf{C}_{u}$ being a $3m$ by $3m$ covariance matrix obtained from the kernel function and coregionalisation matrix.
The task is then to approximate a Gaussian posterior $\mathsf{q}(\mathbf{u} \mid \mathbf{m}_\mathbf{u}, \mathbf{V}_\mathbf{u})$ with the parameters $\mathbf{m}_\mathbf{u}$ and $\mathbf{V}_\mathbf{u}$, the \acl{ELBO} as seen in common variational inference approaches:
\begin{align}
    \mathsf{p}(\mathbf{y}) \geq \mathsf{E}_{\mathsf{q}(\mathbf{u})}[\mathsf{ln} \: \mathsf{p}(\mathbf{y}\mid\mathbf{u})] - \mathsf{KL}[\mathsf{q}(\mathbf{u}), \mathsf{N}(\mathbf{u})] \,,
\end{align}
where we omit the dependence on the inputs for simplicity.
While the KL-divergence can be computed analytically, the expectation $\mathsf{E}_{\mathsf{q}(\mathbf{u})}[\mathsf{ln} \: \mathsf{p}(\mathbf{y}\mid\mathbf{u})]$ still remains intractable, as it requires the computation of $\mathsf{ln} \: \int_{\mathbf{w}} \mathsf{p}(\mathbf{y} \mid \mathbf{w})\mathsf{p}(\mathbf{w} \mid \mathbf{u}) \mathsf{d}\:\mathbf{w}$.
As a solution, in \cite{Hensman2015} the authors propose to apply the Jensen inequality again to obtain a further bound:
\par\nobreak\vspace{-\baselineskip}
{\small
\begin{align*}
    \mathsf{E}_{\mathsf{q}(\mathbf{u})}&[\mathsf{ln} \: \mathsf{p}(\mathbf{y}\mid\mathbf{u})] - \mathsf{KL}[\mathsf{q}(\mathbf{u}), \mathsf{N}(\mathbf{u})] \\  & \geq \mathbf{E}_{\mathsf{q}(\mathbf{u})}\Big[\mathsf{E}_{\mathsf{p}(\mathbf{w}\mid\mathbf{u})}[\mathsf{ln}\:\mathsf{p}(\mathbf{y}\mid\mathbf{w})]\Big] - \mathsf{KL}[\mathsf{q}(\mathbf{u}), \mathsf{N}(\mathbf{u})] \,, \\
    & = \mathsf{E}_{\mathsf{q}(\mathbf{w})}[\mathsf{ln}\:\mathsf{p}(\mathbf{y} \mid \mathbf{w})] - \mathsf{KL}[\mathsf{q}(\mathbf{u}), \mathsf{N}(\mathbf{u})] \,,
\end{align*}
}%
where
\par\nobreak\vspace{-\baselineskip}
{\small
\begin{align}
\label{eq:q_w}
    \mathsf{q}(\mathbf{w}) &= \mathsf{N}(\mathbf{w} \mid \mathbf{m}_\mathbf{w}, \mathbf{V}_\mathbf{w}) \,, \\
    \mathbf{m}_\mathbf{w} & = \mathbf{A} \mathbf{m}_\mathbf{u} \,, \nonumber  \\ 
    \mathbf{V}_{\mathbf{w}} &= \mathbf{C} +  \mathbf{A}(\mathbf{V}_{\mathbf{u}}-\mathbf{C}_{\mathbf{u}})\mathbf{A}^{\mathsf{T}} \,, \nonumber \\ 
    \mathbf{A} &= \mathbf{C}_{\mathbf{w}\mathbf{u}}\mathbf{C}_{\mathbf{u}}^{-1} \,, \nonumber
\end{align}
}%
where $\mathbf{C}$, $\mathbf{C}_{\mathbf{u}}$ are as before, $\mathbf{C}_{\mathbf{w}\mathbf{u}}$ is the $3n$ by $3m$ kernel matrix between the training points and the inducing points.

We can now compute the the expectation $\mathsf{E}_{\mathsf{q}(\mathbf{w})}[\mathsf{ln}\:\mathsf{p}(\mathbf{y} \mid \mathbf{w})] $ via Monte-Carlo samples.
In fact, each $\mathbf{ln} \: \mathsf{p}(y_i \mid \mathbf{w}_i)$ can be efficiently computed via three-dimensional Gaussian samples by selecting the corresponding $\mathbf{m}_{\mathbf{w}_i}$ and $\mathbf{V}_{\mathbf{w}_i}$, and performing a reparameterization trick:
\begin{align*}
    \bm{\epsilon}_{\mathbf{w}_i} &= \mathbf{L}_{\mathbf{w}_i} \bm{\epsilon} + \mathbf{m}_{\mathbf{w}_i} \,, \\
    \mathbf{L}_{\mathbf{w}_i} &= \mathsf{Cholesky}(\mathbf{V}_{\mathbf{w}_i}) \,,
\end{align*}
with $\bm{\epsilon}$ being random samples generated from a three dimensional unit Gaussian.
Such a reparameterization allows us to compute the gradient over $\mathbf{m}_{\mathbf{w}_i}$, $\mathbf{V}_{\mathbf{w}_i}$ through the Monte-Carlo integration, which can then be used to optimise all the parameters and hyperparameters.
In general, for the whole model we have the following parameters to optimise: the kernel parameters ($\mathbf{\theta}, \mathbf{B}$), the varational parameters ($\mathbf{m}_\mathbf{u}, \mathbf{V}_\mathbf{u})$, link parameters $(\gamma, \delta)$, and the locations for the inducing points ($\bm{\mu}_{\mathbf{u}}, \bm{\sigma}_{\mathbf{u}})$.
Additionally, as suggested by \cite{Hensman2015}, we replace the parameter $\mathbf{V}_\mathbf{u}$ by its Cholesky factor to ensure $\mathbf{V}_\mathbf{u}$ to be always positive definite.
The overall model can be computed efficiently using modern frameworks supporting automatic differentiation, and can be easily scaled via both online and distributed training using stochastic gradient descent.
To predict a new test instance, the procedure duplicates the one given in \Cref{eq:q_w}: we first compute mean and covariance of $\mathbf{w}_{\star}$ at the test point, then compute the calibrated densities through Monte-Carlo integration. 

%%Edited
The overall computational cost includes the cost of calculating the KL-bound (same as in \cite{Hensman2015}) and the cost of Monte-Carlo integration within gradient calculation. 
During training time, for $m$ inducing points, computing the KL bound requires $\mathcal{O}(m^3)$. 
Computing the link function and its gradient on a batch takes $\mathcal{O}(n*l)$, where $n$ is the size of the mini-batch, and $l$ is the number of Monte-Carlo samples. 
At prediction time, a single instance will cost $\mathcal{O}(m^2+l)$. 
Both $l$ and $m$ can be selected by the user, so the overall time cost is manageable on personal computers and can be significantly shortened using common deep learning frameworks with GPU acceleration.
%%Done

\section{Experiments}
\label{sec:experiments}

% \HS{Add toy example to explain the whole model, and show performance.}

%In this section we provide empirical analysis of the proposed concept of distribution calibration, together with the \GPB{} model.
We now provide emperical analysis of the \GPB{} method in the context of distribution calibration.
\paragraph{Synthetic data.} %We start with a synthetic dataset with the true conditional distribution known.
As shown in \Cref{fig:toy_res}, we generate the synthetic dataset of $360$ points via an equal mixture of two univariate linear models: $y=0.5x + \epsilon $; and $y=\epsilon $, with $\epsilon\sim\mathsf{N}(0, \sqrt{2})$, %. distributed with mean $0$ and std $2$).
uniformly sampled within $[-10, 40]$.
%\MK{perhaps just $y=0.5x$ and $y=0$, with noise distributed as $\mathsf{N}(0,2)$. - but what's the distribution of $x$?}
The applied \ac{OLS} tends to fit a line in the centre as it can only model a uni-modal Gaussian conditional density.%, which is far away from being distribution-calibrated.

We apply both isotonic regression and \GPB{} to examine their behaviour under such a scenario.
The CDF is almost unaffected after applying the isotonic approach, due to the fact that the original \ac{OLS} was close to being (marginally) quantile calibrated. %, caused by the prediction being exactly between the two lines.
Careful inspection shows some fluctuating patterns in the PDFs, caused by the step-wise nature of isotonic regression. Isotonic calibration %makes the calibration map non-continuous and 
leads to non-smooth PDFs after calibrating the quantiles.

Finally, we analyse the results obtained from a \GPB{} model with 32 inducing points.
The model is trained using the ADAM optimiser with a learning rate of $0.01$.
The resulting PDF is able to capture the high density region around the bottom left region, where the two original linear models overlap.
Towards the right side, the \GPB{} model is able to recover the bi-modal nature from the true conditional density, only having access to the Gaussian distribution predicted by the \ac{OLS}. 
On the CDF, we can observe that the \GPB{} model is able to adjust CDF on different scales conditioning on the original model output, which gives better recovery of the true CDF.
This result can be further illustrated through \Cref{fig:GPB_example}. 
%Since in this example we only examine the outputs from \ac{OLS} (e.g. standard deviation is constant), the \GPB{} model can be visualised as common 1-D Gaussian processes.
As the figure indicates, the \GPB{} model provides different estimations for each given output. % $\mu$.
Corresponding calibration maps can hence be obtained via Monte-Carlo samples as mentioned previously, from which we show three examples on the right of \Cref{fig:GPB_example}.
This helps us to further demonstrate the purpose of distribution calibration.
As the isotonic approach only aims for calibrating the quantiles, it uses the same calibration map on each model output, and provides limited improvements for quantifying model uncertainty in the case of our synthetic dataset.
\GPB{}, on the other hand, is designed to work towards calibrated distributions, which by definition requires conditional calibration maps.

\begin{table*}[!t]
    \centering
    \resizebox{0.95\textwidth}{!}{
    \begin{tabular}{c|rrrrrr|rrrrrr|rrrrrr}
    \hline
    \multicolumn{19}{c}{\textbf{OLS}} \\
    \hline
         \small{Dataset}& \multicolumn{6}{c}{\small{NLL}} & \multicolumn{6}{c}{\small{MSE}} & \multicolumn{6}{c}{\small{PBL}} \\
    \hline
    & \tiny{Base} & \tiny{ISO} & \tiny{GPB$_{8}$} & \tiny{GPB$_{16}$}  & \tiny{GPB$_{32}$} & \tiny{GPB$_{64}$} & \tiny{Base} & \tiny{ISO} & \tiny{GPB$_{8}$} & \tiny{GPB$_{16}$}  & \tiny{GPB$_{32}$} & \tiny{GPB$_{64}$} & \tiny{Base} & \tiny{ISO} & \tiny{GPB$_{8}$} & \tiny{GPB$_{16}$}  & \tiny{GPB$_{32}$} & \tiny{GPB$_{64}$} \\
    \hline
1 & 5.37  & 5.78  & \textbf{5.34 } & 5.46  & 5.39  & 5.38  & 2664.69  & 2664.86  & \textbf{2590.12 } & 2603.50  & 2616.56  & 2640.62  & 1720.24  & 1720.84  & \textbf{1684.4 } & 1760.26  & 1715.  & 1710.14 \\
2 & 3.04  & 3.2  & 2.84  & 2.85  & 2.85  & \textbf{2.83 } & 25.28  & 25.31  & 22.03  & 21.20  & 22.23  & \textbf{20.75 } & 178.34  & 176.65  & 160.37  & 157.32  & 159.79  & \textbf{155.17 }\\
3 & 2.99  & 3.14  & 2.93  & \textbf{2.92 } & 2.92  & 2.92  & 23.00  & 22.98  & 21.78  & 21.45  & 21.37  & \textbf{21.28 } & 524.63  & 523.99  & 506.  & 503.39  & 502.78  & \textbf{501. }\\
4 & 1.94  & 2.22  & 1.84  & \textbf{1.81 } & 3.29  & 2.25  & 2.56  & 2.57  & 2.85  & \textbf{2.45 } & 3.2  & 3.83  & 59.94  & \textbf{59.28 } & 64.40  & 60.25  & 69.12  & 76.41 \\
5 & 3.76  & 4.27  & 3.76  & 3.73  & 3.73  & \textbf{3.71 } & 108.77  & 108.55  & 108.89  & 107.84  & 106.08  & \textbf{105.24 } & 788.66  & 789.96  & 789.37  & 779.02  & 771.04  & \textbf{767.79 }\\
6 & 5.94  & \textbf{5.37 } & 5.56  & 5.47  & 5.44  & 5.47  & \textbf{8502.15 } & 8502.19  & 8624.57  & 8528.08  & 8556.49  & 8528.08  & 112557.73  & \textbf{98345.03 } & 100867.49  & 100962.09  & 100708.98  & 100962.09 \\
    \hline
    \multicolumn{19}{c}{\textbf{BR}} \\
    \hline
         \small{Dataset}& \multicolumn{6}{c}{\small{NLL}} & \multicolumn{6}{c}{\small{MSE}} & \multicolumn{6}{c}{\small{PBL}} \\
    \hline
    & \tiny{Base} & \tiny{ISO} & \tiny{GPB$_{8}$} & \tiny{GPB$_{16}$}  & \tiny{GPB$_{32}$} & \tiny{GPB$_{64}$} & \tiny{Base} & \tiny{ISO} & \tiny{GPB$_{8}$} & \tiny{GPB$_{16}$}  & \tiny{GPB$_{32}$} & \tiny{GPB$_{64}$} & \tiny{Base} & \tiny{ISO} & \tiny{GPB$_{8}$} & \tiny{GPB$_{16}$}  & \tiny{GPB$_{32}$} & \tiny{GPB$_{64}$} \\
    \hline
1 & 5.33  & 5.83  & 5.51  & 5.55  & 5.5  & \textbf{5.28 } & 2398.89  & 2404.58  & 2421.27  & \textbf{2333.93 } & 2405.24  & 2421.43  & 1609.63  & 1620.15  & 1738.29  & 1758.86  & 1728.5  & \textbf{1596.14 }\\
2 & 2.94  & 3.06  & 2.74  & 2.73  & 2.73  & \textbf{2.73 } & 20.65  & 20.53  & \textbf{15.45 } & 15.96  & 16.12  & 15.97  & 169.16  & 164.01  & \textbf{142.45 } & 143.54  & 142.63  & 142.58 \\
3 & 2.96  & 3.15  & 2.91  & 2.92  & 2.89  & \textbf{2.89 } & 21.65  & 21.63  & 19.54  & 19.62  & \textbf{19.44 } & 19.46  & 513.84  & 512.89  & 487.04  & 488.23  & 483.71  & \textbf{482.61 }\\
4 & 1.83  & 2.31  & 2.16  & \textbf{1.80 } & 1.85  & 1.81  & \textbf{2.22 } & 2.22  & 2.75  & 2.57  & 2.65  & 2.57  & 55.90  & \textbf{53.22 } & 59.86  & 57.74  & 58.88  & 57.98 \\
5 & 3.76  & 4.01  & 3.75  & 3.73  & \textbf{3.73 } & 3.74  & 107.80  & 107.82  & 108.41  & \textbf{107.78 } & 107.82  & 108.44  & 791.3  & 791.23  & 786.36  & 782.1  & \textbf{779.74 } & 784.40 \\
6 & 5.96  & \textbf{5.4 } & 5.54  & 5.49  & 5.41  & 5.44  & 8750.67  & \textbf{8750.46 } & 9162.78  & 9382.19  & 8800.73  & 8771.09  & 113298.92  & \textbf{98739.99 } & 101289.49  & 106380.51  & 98973.39  & 99643.87 \\
    \hline
\multicolumn{19}{c}{\textbf{NN}} \\
    \hline
         \small{Dataset}& \multicolumn{6}{c}{\small{NLL}} & \multicolumn{6}{c}{\small{MSE}} & \multicolumn{6}{c}{\small{PBL}} \\
    \hline
    & \tiny{Base} & \tiny{ISO} & \tiny{GPB$_{8}$} & \tiny{GPB$_{16}$}  & \tiny{GPB$_{32}$} & \tiny{GPB$_{64}$} & \tiny{Base} & \tiny{ISO} & \tiny{GPB$_{8}$} & \tiny{GPB$_{16}$}  & \tiny{GPB$_{32}$} & \tiny{GPB$_{64}$} & \tiny{Base} & \tiny{ISO} & \tiny{GPB$_{8}$} & \tiny{GPB$_{16}$}  & \tiny{GPB$_{32}$} & \tiny{GPB$_{64}$} \\
    \hline
1 & 7.30  & 6.08  & 5.43  & 5.44  & \textbf{5.42 } & 5.44  & 3584.02  & 3550.86  & 3521.95  & 3542.58  & 3523.05  & \textbf{3517.75 } & 2303.88  & 2004.76  & 1969.62  & 1978.68  & \textbf{1966. } & 1975.43 \\
2 & 2.78  & 2.86  & 2.74  & 2.75  & 2.72  & \textbf{2.72 } & 15.85  & 15.89  & 15.16  & 14.66  & \textbf{14.51 } & 14.69  & 145.90  & 144.40  & 141.61  & 141.01  & \textbf{138.32 } & 139.1 \\
3 & 5.96  & 5.04  & 3.78  & 3.72  & 3.77  & \textbf{3.67 } & 1031.88  & 1367.93  & 64.48  & \textbf{61.53 } & 63.14  & 62.80  & 4044.07  & 3867.07  & 1018.82  & 976.44  & 1003.24  & \textbf{945.84 }\\
4 & 7.49  & 16.71  & 1.68  & 1.68  & 1.64  & \textbf{1.58 } & 2.36  & 2.35  & \textbf{2.3 } & 2.33  & 2.38  & 2.35  & 68.86  & 57.64  & 56.28  & 56.52  & 56.47  & \textbf{56.14 }\\
5 & 3.29  & 3.45  & 3.16  & 3.16  & \textbf{3.16 } & 3.16  & 53.59  & \textbf{41.38 } & 42.69  & 42.57  & 42.90  & 42.79  & 529.13  & \textbf{461.93 } & 464.56  & 465.01  & 466.24  & 465.81 \\
6 & 18.09  & 7.10  & 5.49  & 5.25  & \textbf{5.22 } & 5.25  & \textbf{9833.01 } & 9840.09  & 11206.57  & 12011.97  & 9890.18  & 12011.97  & 110156.3  & \textbf{98102.22 } & 117965.81  & 112022.83  & 98574.75  & 112022.83 \\
    \hline
\multicolumn{19}{c}{\textbf{GP}} \\
    \hline
         \small{Dataset}& \multicolumn{6}{c}{\small{NLL}} & \multicolumn{6}{c}{\small{MSE}} & \multicolumn{6}{c}{\small{PBL}} \\
    \hline
    & \tiny{Base} & \tiny{ISO} & \tiny{GPB$_{8}$} & \tiny{GPB$_{16}$}  & \tiny{GPB$_{32}$} & \tiny{GPB$_{64}$} & \tiny{Base} & \tiny{ISO} & \tiny{GPB$_{8}$} & \tiny{GPB$_{16}$}  & \tiny{GPB$_{32}$} & \tiny{GPB$_{64}$} & \tiny{Base} & \tiny{ISO} & \tiny{GPB$_{8}$} & \tiny{GPB$_{16}$}  & \tiny{GPB$_{32}$} & \tiny{GPB$_{64}$} \\
    \hline
1 & 5.43  & 5.74  & \textbf{5.43 } & 5.43  & 5.43  & 5.43  & 3022.28  & 3022.22  & \textbf{3017.1 } & 3030.25  & 3027.80  & 3017.9  & 1820.38  & 1822.93  & \textbf{1818.42 } & 1821.11  & 1821.67  & 1819.82 \\
2 & 2.59  & 2.77  & 2.46  & 2.43  & 2.44  & \textbf{2.43 } & 9.86  & 9.85  & 9.08  & 8.66  & 8.86  & \textbf{8.51 } & 116.33  & 113.23  & 108.35  & 105.36  & 106.25  & \textbf{104.47 }\\
3 & 3.13  & 3.33  & 3.28  & 3.19  & 3.25  & \textbf{3.12 } & 30.13  & 30.08  & 29.79  & \textbf{29.54 } & 29.55  & 29.54  & 407.47  & 407.28  & 430.65  & 415.09  & 425.39  & \textbf{403.90 }\\
4 & 1.84  & \textbf{-0.41 } & 2.22  & 2.24  & 2.02  & 1.63  & \textbf{2.27 } & 2.27  & 2.47  & 3.28  & 2.4  & 2.3  & 56.47  & 122.1  & 70.53  & 75.39  & 63.13  & \textbf{54.51 }\\
5 & 3.08  & 3.31  & 3.08  & 3.09  & \textbf{3.08 } & 3.08  & 29.91  & 29.91  & 30.24  & 30.55  & 30.11  & \textbf{29.91 } & \textbf{394.32 } & 399.14  & 396.3  & 399.37  & 395.83  & 395.33 \\
6 & 6.28  & 6.08  & 5.63  & 5.75  & 5.80  & \textbf{5.46 } & 13307.05  & 13316.46  & 13548.90  & 13666.95  & 13739.56  & \textbf{13284.93 } & 6343.86  & 5835.73  & 5788.3  & 5818.16  & 5789.79  & \textbf{5762.03 }\\
    \hline
    \end{tabular}}
    \caption{Predictive performance with the different base models, the lowest loss is shown in bold.}
    \label{tab:RES_ALL}
\end{table*}

%While we have demonstrated the idea of distribution calibration and the behaviour of the \GPB{} model, we move to a more quantitative analysis on real world datasets. 
\paragraph{Real world datasets.}
We focus on three evaluation measures: 
\begin{enumerate*}[label=(\roman*)]
    \item predictive \acf{NLL}, 
    \item \ac{MSE}, and
    \item \ac{PBL}. 
\end{enumerate*}
As discussed previously, for a pre-trained model, the NLL will be minimised if a model achieves density-level calibration. 
MSE, on the other hand, is a generic measure to evaluate a model's predictive performance.
Pinball loss commonly used to train and evaluate the calibration of quantiles \cite{Fasiolo2017}, and is defined as follows:
\par\nobreak\vspace{-\baselineskip}
{\small
\begin{align*}
    \mathsf{PBL}(\tau) &= \mathsf{E}_{(\mathbf{X}, Y)}[\mathsf{L}(y, g(\mathbf{x}, \tau))] \,, \\
    \mathsf{L}\Big(y, g(\mathbf{x}, \tau)\Big)& = 
    \begin{cases}
    (1-\tau)\Big(g(\mathbf{x}, \tau) - y\Big) \: \text{if} \: y < g(\mathbf{x}, \tau)\,, \\
    \tau \Big(y - g(\mathbf{x}, \tau)\Big) \quad \quad \quad \text{otherwise}\,.
    \end{cases}
\end{align*}
}%
%
%%Edited
Pinball loss is an asymmetric loss where the overestimation loss and underestimation loss are weighted with the predicted quantiles, and hence specified for each given quantile. 
In the following experiments we calculate the averaged loss from the quantiles of $0.05$ to $0.95$, in increments of $0.05$.
%Done

We select the following four regression models: 
\begin{enumerate*}
    \item \ac{OLS} regression,
    \item \ac{BRR},
    \item \ac{GPR}, and 
    \item \acp{BNN}.% (\acp{NN} with dropout approximations \cite{Gal2016}).
\end{enumerate*}

The first two models provide uniform variance estimates for each instance; \ac{BRR} optimises the variance using an inverse-gamma prior.
The later two provide variance estimates for each instance. 
While \ac{GPR} is derived within the non-parametric Bayesian framework, the \ac{NN} model doesn't provide uncertainty estimations by default, but through the use of dropout approximations, we can obtain some form of uncertainty around the observations \cite{Gal2016}. 

%\MK{Guo not relevant here, as they do classification?}

The experiments are applied on the following UCI datasets (sizes in parentheses):
\begin{enumerate*}%label=\itshape{\arabic*\upshape})]
    \item Diabetes (442), 
    \item Boston (506), 
    \item Airfoil (1503), 
    \item Forest Fire (517), 
    \item Strength (1030), 
    \item Energy (19735). 
\end{enumerate*}
The exception is \ac{GPR}, where we limit the maximal dataset size to $1000$ data points, due to the computational complexity of \acp{GP}.
All the experiments use a random $(0.75, 0.25)$ train-test split, with both the base model and calibrators trained on the same set. 
% Following \cite{Kuleshov18}, we omit a distinct calibration set and use the same training set to optimise the calibration models, since both calibration methods in the experiments are non-parametric or semi-parametric and can potentially benefit from large training sets \cite{Kull2017b},  
During prediction time, $4096$ points with equal distance are selected from ${\mu_{min} - 8 \: \sigma_{max}}$ to ${\mu_{max} + 8 \: \sigma_{max}}$, where $\mu_{min}$, $\mu_{max}$ are the minimal and maximal predicted mean values in the training set, and $\sigma_{max}$ is the maximal of the predicted standard deviation.
This ensures that we cover nearly the whole predicted and calibrated densities, allowing the expected value to be approximated through the trapezoid rule.    

For the \acp{NN}, we use the same setting as in \cite{Kuleshov18}, which is a 2-layer fully-connected structure with 128 hidden units per layer and ReLU activation.
The dropout rate is set to $0.5$, default weight decay of $10^{-4}$ and the length scale of $1.0$ are used to approximate the mean and variance following the results given in \cite{Gal2016}.

We run \GPB{} with $8$, $16$, $32$ and $64$ inducing points, batch size of $128$, and $64$ Monte-Carlo samples per batch to compute the objective function and the gradient.
The parameters are again optimised using ADAM with a learning rate of $0.001$.
The results are given in \Cref{tab:RES_ALL}, showing that for most cases the \GPB{} model is capable of improving the results on all the three evaluation measures.
\GPB{} didn't show improvements in some cases for dataset 3, which has about $20000$ instances. 
Increasing the number of inducing points can be potentially beneficial in such cases according existing sparse GP literature.
Another observation is on dataset 4 with GP as the base model, where the isotonic approach gives a significantly low NLL.
This dataset has a target distribution where about half of the target values are $0$, there is a chance that the isotonic regression happens to assign a step change exactly or very close to $0$, which results in a very high log-likelihood.
%Although the performance is better here, such result is highly stochastic, and cause the isotonic approach to performs worse on other cases.

In summary, the \GPB{} method is clearly superior in the synthetic example where local calibration is required, and in most real-world examples. We also note that for \ac{NLL}, which we argue is the most faithful metric for distribution calibration, \GPB{} almost always performs best irrespective of the choice of the number of inducing points.

\section{Conclusions}
\label{sec:conclusion}

While both calibration of classifiers and quantile regressors have been studied broadly, we introduce the idea of distribution calibration.
Models that are well calibrated on a distribution level provide improved uncertainty quantification on the target variable, as well being calibrated on the quantile level. 

Although %by definition 
distribution calibration is applicable to any conditional density estimator, we focus on a regression setting given its popularity in predictive machine learning tasks.
We propose the \GPB{} approach which combines multi-output \acp{GP} with Beta calibration from binary classification, and distribution regression.
The sparse variational inference scheme allows the model to scale to large datasets and shows strong empirical performance.
%
%Here we would like to cover two major interesting directions to further develop the area of distribution calibration.
%While the Beta link function has certain properties, being parametric means it can only approximate a fixed domain of calibration maps.
%Enlarge the choices of the link function for different scenarios is hence of our primary interests.
%Also, as mentioned, kernel mean embedding can work with any distributions with corresponding samples, the proposed model can hence naturally generalise to other density estimators.
%Investigating related performances would thus become a valuable addition to the future works.
Directions for further work include non-parametric calibration maps, and generalising the model to other forms of density estimation.

%we focus on a \posthoc{} approach to improve predicted uncertainties on regression models, which can be applied to the outputs from any trained models. We first discussed calibration in the context of regression and introduced the concept of distribution calibration, then presented a calibration approach based on multi-output \acp{GP} with a novel link function. We demonstrated empirically that for a set of common regression models this method showed improvements in both predictive performance and calibration level relative to the baselines.
%Interesting directions for future work include calibration for other regression settings such as generalised linear models or multi-output regression.

\clearpage
\newpage

\section*{Acknowledgements}

This work was supported by the SPHERE Interdisciplinary Research Collaboration, funded by the UK Engineering and Physical Sciences Research Council under grant EP/K031910/1.
MK was supported by the Estonian Research Council under grant PUT1458.

\bibliographystyle{icml2019}
\bibliography{main}

\end{document}